\documentclass{article}

\PassOptionsToPackage{numbers, compress}{natbib}

\usepackage[final]{neurips_2024}

\usepackage[utf8]{inputenc} %
\usepackage[T1]{fontenc}    %
\usepackage[hidelinks, colorlinks=true, linkcolor=red, urlcolor=blue, citecolor=darkgreen]{hyperref}  %
\usepackage{url}            %
\usepackage{booktabs}       %
\usepackage{amsfonts}       %
\usepackage{nicefrac}       %
\usepackage{microtype}      %
\usepackage{amsmath}
\usepackage{graphicx}
\usepackage{subcaption}
\usepackage{amsthm}
\usepackage{thmtools, thm-restate}
\usepackage{float}
\usepackage{multirow}
\usepackage[table,xcdraw]{xcolor}
\usepackage{array}
\usepackage{makecell}
\usepackage{wrapfig}
\usepackage{soul}
\usepackage{stmaryrd}
\usepackage{mathtools}
\usepackage{caption}
\usepackage{bm}
\usepackage[normalem]{ulem}
\usepackage{algorithm}
\usepackage{algpseudocode}

\definecolor{darkgreen}{rgb}{0.0, 0.5, 0.0}

\usepackage[colorinlistoftodos]{todonotes}

\newcommand{\versus}{\textit{vs} }

\setlength\heavyrulewidth{0.3ex}
\setlength\lightrulewidth{0.15ex}

\algrenewcommand\algorithmicrequire{\textbf{Input:}}
\algrenewcommand\algorithmicensure{\textbf{Output:}}

\bibliographystyle{unsrtnat}

\usepackage{amsmath,amsfonts,bm}

\def\eqref#1{equation~\ref{#1}}

\def\1{\bm{1}}

\def\rva{{\mathbf{a}}}

\def\rvw{{\mathbf{w}}}
\def\rvx{{\mathbf{x}}}

\def\rmI{{\mathbf{I}}}

\def\rmX{{\mathbf{X}}}

\def\vf{{\bm{f}}}

\def\vh{{\bm{h}}}

\DeclareMathAlphabet{\mathsfit}{\encodingdefault}{\sfdefault}{m}{sl}
\SetMathAlphabet{\mathsfit}{bold}{\encodingdefault}{\sfdefault}{bx}{n}

\newcommand{\KL}{D_{\mathrm{KL}}}

\newcommand{\modelname}{DiffPath}

\newcommand\norm[1]{\left\lVert#1\right\rVert}
\newcommand{\boldeps}{\boldsymbol{\epsilon}}
\useunder{\uline}{\ul}{}
\def\d{{\mathrm{d}}}

\newcommand{\titlename}{Out-of-Distribution Detection with a Single Unconditional Diffusion Model}
\title{\titlename}

\author{Alvin Heng\textsuperscript{1}, Alexandre H. Thiery\textsuperscript{2}, Harold Soh\textsuperscript{1,3}\\ 
\small \textsuperscript{1}Department of Computer Science, National University of Singapore\\
\small \textsuperscript{2}Department of Statistics and Data Science, National University of Singapore\\
\small \textsuperscript{3}Smart Systems Institute, National University of Singapore \\
\texttt{\{alvinh, harold\}@comp.nus.edu.sg}
}

\begin{document}
\maketitle

\begin{abstract}
    Out-of-distribution (OOD) detection is a critical task in machine learning that seeks to identify abnormal samples. Traditionally, unsupervised methods utilize a deep generative model for OOD detection. However, such approaches require a new model to be trained for each inlier dataset. This paper explores whether a single model can  perform OOD detection across diverse tasks. To that end, we introduce  Diffusion Paths (DiffPath), which  uses a single diffusion model originally trained to perform unconditional generation for OOD detection. We introduce a novel technique of measuring the rate-of-change and curvature of the diffusion paths connecting samples to the standard normal. Extensive experiments show that with a single model, DiffPath is competitive with prior work using individual models on a variety of OOD tasks involving different distributions. Our code is publicly available at \texttt{\href{https://github.com/clear-nus/diffpath}{https://github.com/clear-nus/diffpath}}.
 \end{abstract}

\section{Introduction}
Out-of-distribution (OOD) detection, also known as anomaly or outlier detection, seeks to detect abnormal samples that are far from a given distribution. This is a vital problem as deep neural networks are known to be overconfident when making incorrect predictions on outlier samples~\cite{nguyen2015deep, nalisnick2018deep}, leading to potential issues in safety-critical applications such as robotics, healthcare, finance, and criminal justice~\cite{yang2021generalized}. Traditionally, OOD detection using only unlabeled data relies on training a generative model on in-distribution data. Thereafter, measures such as model likelihood or its variants are used as an OOD detection score~\cite{ren2019likelihood, choi2018waic, morningstar2021density}. An alternative approach is to utilize the excellent sampling capabilities of diffusion models (DMs) to reconstruct corrupted samples, and use the reconstruction loss as an OOD measure~\cite{graham2023denoising, liu2023unsupervised, choi2024projection}. 

However, these conventional methods require separate generative models tailored to specific inlier distributions and require retraining if the inlier data changes, such as in continual learning setups. This prompts the question: can OOD detection be performed using a \textit{single} generative model? We answer in the affirmative and present Diffusion Paths (\modelname{}) in this paper. While the use of a single model for OOD detection has been proposed in the discriminative setting~\cite{xiao2021we}, to the best of our knowledge, we are the first to explore this for generative models.
We believe that the generative setting is particularly salient in light of recent trends where single generative foundation models are utilized across various tasks~\cite{brown2020language,bommasani2021opportunities}.

Our method utilizes a single pretrained DM. In a departure from prior works that utilize variants of likelihoods~\cite{morningstar2021density, choi2018waic,ren2019likelihood} or reconstruction losses~\cite{graham2023denoising,liu2023unsupervised,choi2024projection}, we propose to perform OOD detection by measuring characteristics of the forward diffusion trajectory, specifically its \textit{rate-of-change} and \textit{curvature}, which can be computed from the score predicted by the diffusion model. We provide theoretical and empirical analyses that motivate these quantities as useful OOD detectors; their magnitudes are similar for samples from the same distribution and different otherwise. We summarize our contributions as follows:
\begin{wrapfigure}{r}{0.5\textwidth} 
  \vspace{-0.2cm}
  \centering
  \includegraphics[width=0.49\textwidth, trim=0cm 0cm 1cm 0cm, clip=true]{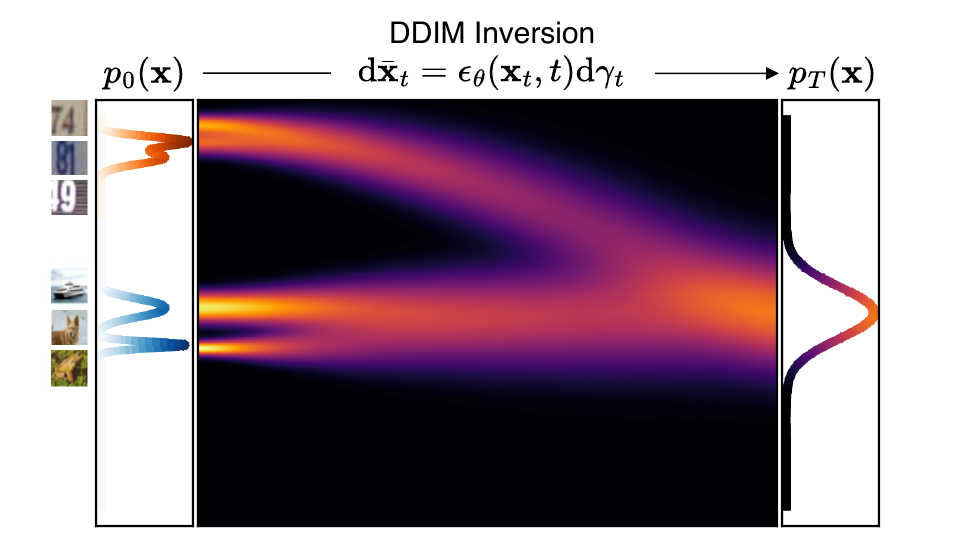} 
  \caption{Illustration of the diffusion paths of samples from two different distributions (CIFAR10 and SVHN) obtained via DDIM integration. The paths have different first and second derivatives (rate-of-change and curvature). We propose to measure these quantities for OOD detection. }
  \label{fig:main_fig}
  \vspace{-0.8cm}
\end{wrapfigure}
\begin{enumerate}
    \item We introduce a novel approach to OOD detection by examining the rate-of-change and curvature along the diffusion path connecting different distributions to standard normal.
    \item Through comprehensive experiments with various datasets, we show that a single generative model is competitive with baselines that necessitates separate models for each distribution.
    \item We offer a theoretical framework demonstrating that our method characterizes properties of the optimal transport (OT) path between the data distribution and the standard normal.
\end{enumerate}

\section{Background}
\paragraph{Score-based Diffusion Models.}
Let $p_0(\rvx)$ denote the data distribution. We define a stochastic differential equation (SDE), also known as the forward process, to diffuse $p_0(\rvx)$ to a noise distribution $p_T(\rvx)$:
 \begin{equation}\label{eq:forward}
     \d\rvx_t= \vf(\rvx_t,t)\d t + g(t)\d\rvw_t, \ \ \ \rvx_0 \sim p_0(\rvx)
 \end{equation}
where $\vf(\cdot, t) : \mathbb{R}^D \rightarrow \mathbb{R}^D $ is the drift coefficient, $g(t) \in \mathbb{R} $ is the diffusion coefficient and $\rvw_t \in \mathbb{R}^D$ is the standard Wiener process (Brownian motion). We denote $p_t$ as the marginal distribution of Eq.~\ref{eq:forward} at time $t$. By starting from noise samples $\rvx_T \sim p_T$, new samples $\rvx_0\sim p_0(\rvx)$ can be sampled by simulating the reverse SDE
\begin{equation}\label{eq:reverse_sde}
    \d\rvx_t = [\vf(\rvx_t,t) - g(t)^2 \nabla_\rvx \log p_t(\rvx_t)]\d t + g(t) \d\bar{\rvw}_t, \ \ \ \rvx_T \sim p_T(\rvx)
\end{equation}
where $\bar{\rvw}_t$ is the standard Wiener process when time flows backwards from $T$ to 0, and $\d t$ is an infinitesimal negative timestep. The diffusion process described by Eq.~\ref{eq:forward} also has an equivalent ODE formulation, termed the probability flow (PF) ODE \cite{song2020score}, given by
\begin{equation}\label{eq:pfode}
    \d \rvx_t = \left[ \vf(\rvx_t, t) - \frac{1}{2}g(t)^2 \nabla_\rvx \log p_t(\rvx_t) \right] \d t.
\end{equation}
The ODE and SDE formulations are equivalent in the sense that trajectories under both processes share the same marginal distribution $p_t(\rvx_t)$. Hence, given an estimate of the score function $s_\theta(\rvx_t,t) \approx \nabla_\rvx \log p_t(\rvx_t)$, which can be obtained using score-matching approaches~\cite{song2019generative,song2020score}, one can sample from the diffusion model by solving the reverse SDE or integrating the PF ODE backwards in time.

In this work, we focus on the variance-preserving formulation used in DDPM~\cite{ho2020denoising}, which is given by an Ornstein-Uhlenbeck forward process
\begin{equation}\label{eq:vpsde}
    \d \rvx_t = -\frac{1}{2}\beta_t \rvx_t \d t + \sqrt{\beta_t} \d\rvw_t, \ \ \ \rvx_0 \sim p_0(\rvx)
\end{equation}
where $\beta_t$ are time-dependent constants. Under Eq.~\ref{eq:vpsde}, diffused samples $\rvx_t$ can be sampled analytically via $p_t(\rvx_t|\rvx_0) = \mathcal{N}(\rvx_t; \sqrt{\bar{\alpha}_t}\rvx_0, \sigma_t^2\rmI)$, where $\beta_t = -\frac{\d}{\d t}\log \bar{\alpha}_t$ and $\sigma_t^2 = 1-\bar{\alpha}_t$. The score estimator, $\boldeps_\theta(\rvx_t,t) \approx -\sigma_t \nabla_\rvx \log p_t(\rvx_t)$, can be trained via the following objective
\begin{equation}
    \min_\theta \mathbb{E}_{t\sim \mathcal{U}[0,1] \rvx_0 \sim p_0(\rvx_0)\rvx_t\sim p_t(\rvx_t|\rvx_0)} \left[ \norm{\boldeps_\theta(\rvx_t, t) - \boldeps}^2_2 \right],
\end{equation}
where $\boldeps = -\sigma_t \nabla_\rvx \log p_t(\rvx_t|\rvx_0) = (\rvx_t - \sqrt{\bar{\alpha}}_t \rvx_0)/\sigma_t $. %

\paragraph{Unsupervised OOD Detection.} 
Given a distribution of interest $p(\rvx)$, the goal of OOD detection is to construct a scoring function which outputs a quantity $S_\theta(\rvx) \in \mathbb{R}$ that identifies if a given test point $\rvx_{\textrm{test}}$ is from $p(\rvx)$. In this work, a higher value of $S_\theta(\rvx_{\textrm{test}})$ indicates that the sample is more likely to be drawn from $p(\rvx)$. We will use the notation ``A \textit{vs} B'' to denote the task of distinguishing samples between A and B, where A is the inlier distribution and B is the outlier distribution. In unsupervised OOD detection, one must construct the function $S_\theta$ using only knowledge of A.

\section{Diffusion Models for OOD Detection}\label{sec:method}
An overview of our method, \modelname{}, is illustrated in Fig.~\ref{fig:main_fig}. \modelname{} is based on the insight that the \textit{rate-of-change} and \textit{curvature} of the diffusion path connecting samples to standard normal differ between distributions, making them effective indicators for OOD detection. This section outlines the methodology behind \modelname{}. We begin in  Sec.~\ref{sec:ll_does_not_work}, where we provide evidence that likelihoods from a diffusion model are insufficient for OOD detection. Next,  Sec.~\ref{sec:scores_kl} shows that the score function is a measure of the rate-of-change and motivates the use of a single generative model. We then  motivate the curvature as the derivative of the score in Sec.~\ref{sec:second-order-taylor}. We consider the curvature statistic as one variation of our method and abbreviate it as \modelname{}-1D. In Sec.~\ref{sec:ot}, we contextualize our method in terms of the optimal transport path between samples and standard normal, and finally propose a higher-order, hybrid variation called \modelname{}-6D in Sec.~\ref{sec:higher_order_statistic}, which incorporates both the rate-of-change and curvature quantities.

\subsection{Diffusion Model Likelihoods Do Not Work for OOD Detection}\label{sec:ll_does_not_work}
When leveraging a likelihood-based generative model for OOD detection, the most natural statistic to consider is the likelihood itself. As DMs are trained to maximize the evidence lower bound (ELBO) of the data, one would expect that in-distribution samples have higher ELBO under the model compared to out-of-distribution samples. However, prior works~\cite{nalisnick2018deep, serra2019input} have shown that the opposite behavior was observed in deep generative models, such as normalizing flows, where the model assigned higher likelihoods to OOD samples.

In Fig. \ref{fig:ll_eps_lipschitz}, we plot the distributions of the negative ELBO (denoted NLL) of the CIFAR10 and SVHN training sets for a DM trained on CIFAR10. Our results corroborate earlier findings that likelihoods are not good OOD detectors; the NLL of CIFAR10 samples are higher than SVHN samples, meaning in-distribution samples have lower likelihoods than out-of-distribution samples. The poor AUROC score in Table~\ref{tab:ll_eps_lipschitz} quantitatively demonstrates the inability of likelihoods to distinguish between inlier and outlier samples. This motivates us to search for better statistics that we can extract from DMs for OOD detection. 

\subsection{Scores as an OOD Statistic}\label{sec:scores_kl}

\begin{figure}
    \centering
    \begin{minipage}{0.75\textwidth}
        \centering
        \includegraphics[width=\linewidth]{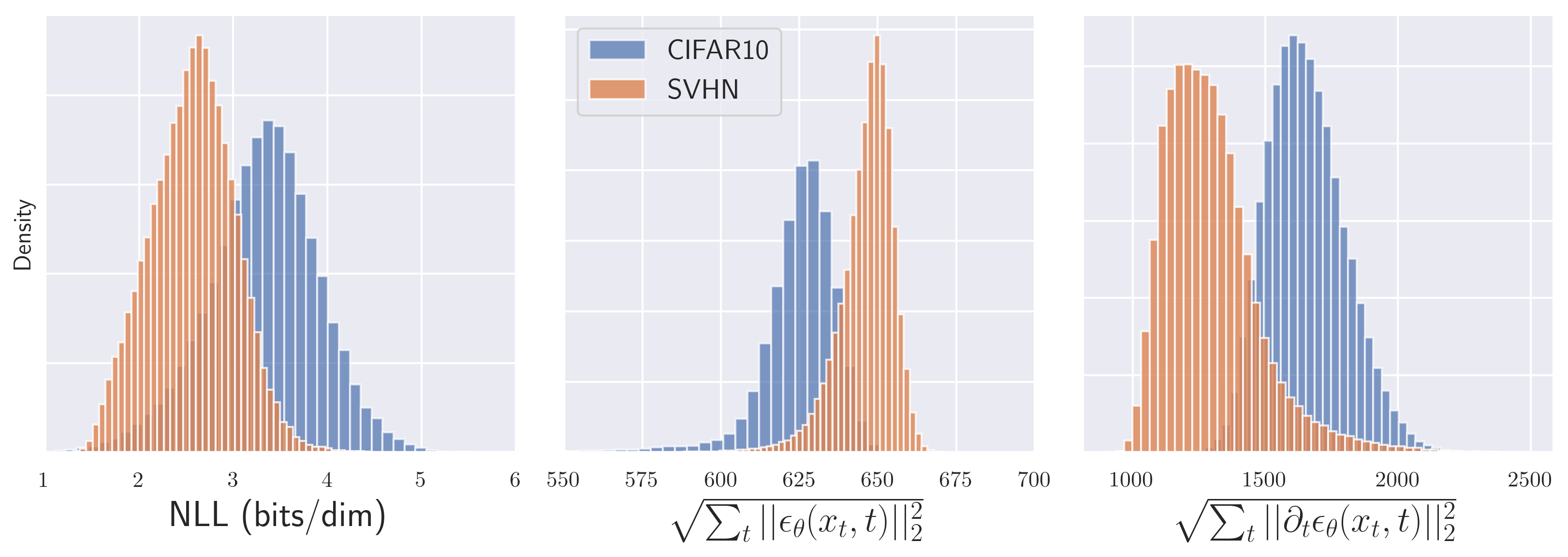} %
        \caption{Histograms of various statistics of the respective training sets. The NLL is calculated using a diffusion model trained on CIFAR10, while the other two statistics are calculated with a model trained on ImageNet.}
        \label{fig:ll_eps_lipschitz}
    \end{minipage}\hfill
\begin{minipage}{0.25\textwidth}
\centering
\captionof{table}{AUROC of statistics shown in Fig.~\ref{fig:ll_eps_lipschitz}.}\label{tab:ll_eps_lipschitz}
\scalebox{0.75}{
\addtolength{\tabcolsep}{-0.4em}
\begin{tabular}{@{}cc@{}}
\toprule
Method                                                         & \begin{tabular}[c]{@{}c@{}}C10 \textit{vs} \\ SVHN\end{tabular} \\ \midrule
NLL                                                            & 0.091                                                  \\
$\sqrt{\sum_t \norm{\boldeps_\theta(\rvx_t,t)}_2^2}$            & 0.856                                                  \\
$\sqrt{\sum_t \norm{\partial_t \boldeps_\theta(\rvx_t,t)}_2^2}$ & 0.965                                                  \\ \bottomrule
\end{tabular}
}
\end{minipage}
\end{figure}

\paragraph{Scores as KL Divergence Proxy.}
We start by rewriting the PF ODE, Eq.~\ref{eq:pfode}, in the following form:
\begin{equation}\label{eq:pf_ode_reparam}
    \frac{\d\rvx_t}{\d t} = \vf(\rvx_t,t) + \frac{g(t)^2}{2\sigma_t} \boldeps_p(\rvx_t,t)
\end{equation}
where we have parameterized the score as $\boldeps_p(\rvx_t,t) = -\sigma_t \nabla_\rvx \log p_t(\rvx)$.
\begin{restatable}{theorem}{kleps}
\label{thm:kl_to_eps}
Denote $\phi_t$ and $\psi_t$ as the marginals from evolving two distinct distributions $\phi_0$ and $\psi_0$ via their respective probability flow ODEs (Eq. \ref{eq:pf_ode_reparam}) forward in time. We consider the case with the same forward process, i.e., the two PF ODEs have the same $\vf(\rvx_t,t), g(t) \textrm{ and } \sigma_t$. Under some regularity conditions stated in Appendix~\ref{app:kl_to_eps_proof}, 
\begin{equation*}
    \KL(\phi_0 \| \psi_0) = \frac{1}{2}\int_0^T \mathbb{E}_{\rvx \sim \phi_t} \frac{g(t)^2}{\sigma_t} \norm{ \boldeps_\phi(\rvx_t, t) - \boldeps_\psi(\rvx_t,t)}^2_2 \d t + \KL(\phi_T \| \psi_T).
\end{equation*}
\end{restatable}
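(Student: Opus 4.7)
The plan is to differentiate $\KL(\phi_t \| \psi_t)$ in time, reduce the derivative to a relative Fisher information by integration by parts, integrate from $0$ to $T$, and finally rewrite the score difference in the $\boldeps$ parameterization.

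First I would observe that, although each distribution was described as evolving under its own PF ODE, the marginals $\phi_t$ and $\psi_t$ can equivalently be viewed as marginals of the \emph{same} forward SDE (Eq.~\ref{eq:forward}). Hence both $\phi_t$ and $\psi_t$ satisfy the same Fokker--Planck equation
$$\partial_t p_t \;=\; -\nabla\!\cdot\!\bigl(\vf\, p_t\bigr) \;+\; \tfrac{1}{2}\, g(t)^2 \Delta p_t,$$
which is exactly the continuity equation associated with the PF-ODE velocity $\vf - \tfrac{1}{2} g(t)^2 \nabla \log p_t$ appearing in Eq.~\ref{eq:pfode}.

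Next I would compute
$$\frac{\d}{\d t}\KL(\phi_t\|\psi_t) \;=\; \int (\partial_t \phi_t)\log\frac{\phi_t}{\psi_t}\, \d\rvx \;+\; \int \phi_t\, \partial_t \log \phi_t\, \d\rvx \;-\; \int \phi_t\, \frac{\partial_t \psi_t}{\psi_t}\, \d\rvx .$$
The middle integrand reduces to $\partial_t \phi_t$, whose integral vanishes because $\phi_t$ is a probability density. Substituting the Fokker--Planck equation for $\partial_t \phi_t$ and $\partial_t \psi_t$ into the remaining two terms and applying integration by parts (under the stated regularity conditions, so boundary contributions at infinity vanish) causes the drift contributions to cancel and leaves the classical relative-Fisher-information identity
$$\frac{\d}{\d t}\KL(\phi_t\|\psi_t) \;=\; -\frac{g(t)^2}{2}\,\mathbb{E}_{\rvx\sim\phi_t}\!\left[\norm{\nabla\log\phi_t(\rvx) - \nabla\log\psi_t(\rvx)}_2^2\right].$$

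Finally I would integrate this identity from $0$ to $T$, rearrange to isolate $\KL(\phi_0\|\psi_0)$ on the left with $\KL(\phi_T\|\psi_T)$ as a boundary term on the right, and substitute $\nabla \log p_t = -\boldeps_p/\sigma_t$ to convert the score difference into the $\boldeps$ difference, yielding the stated expression.

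The main obstacle is making the integration-by-parts rigorous: this is where the appendix's regularity conditions are used, specifically smoothness of $\phi_t, \psi_t$ together with enough tail decay so that $p_t \log(\phi_t/\psi_t)$ times either $\vf$ or $\nabla \log p_t$ vanishes at infinity. A secondary subtlety is justifying the exchange of $\d/\d t$ and $\int\,\d\rvx$ in the initial computation, which follows from the same integrability hypotheses. The rest is symbolic manipulation.
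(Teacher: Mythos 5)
Your approach is correct and essentially identical to the paper's: both differentiate $\KL(\phi_t\|\psi_t)$ in time, substitute the evolution equation for the marginals, integrate by parts to reach a relative Fisher information, integrate over $[0,T]$, and re-express in the $\boldeps$ parameterization. Your framing in terms of the common Fokker--Planck equation $\partial_t p = -\nabla\cdot(\vf p) + \tfrac12 g^2 \Delta p$ is exactly the paper's continuity equation $\partial_t p = \nabla\cdot(\vh_p\, p)$ rewritten (since the PF ODE preserves the SDE marginals), and it does make the drift cancellation transparent.

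One thing to flag: the final substitution step does \emph{not} reproduce the stated coefficient, so you should not claim that it ``yields the stated expression.'' With $\boldeps_p = -\sigma_t\nabla\log p_t$ we have $\|\nabla\log\phi_t - \nabla\log\psi_t\|_2^2 = \sigma_t^{-2}\|\boldeps_\phi - \boldeps_\psi\|_2^2$, so your (correct) identity
\begin{equation*}
\frac{\d}{\d t}\KL(\phi_t\|\psi_t) = -\frac{g(t)^2}{2}\,\E_{\rvx\sim\phi_t}\!\left[\norm{\nabla\log\phi_t - \nabla\log\psi_t}_2^2\right]
\end{equation*}
integrates to
\begin{equation*}
\KL(\phi_0\|\psi_0) = \frac{1}{2}\int_0^T \E_{\rvx\sim\phi_t}\,\frac{g(t)^2}{\sigma_t^2}\norm{\boldeps_\phi(\rvx_t,t) - \boldeps_\psi(\rvx_t,t)}_2^2\,\d t + \KL(\phi_T\|\psi_T),
\end{equation*}
with $\sigma_t^2$ in the denominator rather than $\sigma_t$. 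The theorem as printed has the wrong power of $\sigma_t$, and the appendix proof contains a compensating sign slip at step~(i) which hides this. None of this changes how the result is used qualitatively, but a careful execution of your plan would have surfaced the discrepancy, and you should note it rather than assert agreement with the stated formula.
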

The term $\KL(\phi_T \| \psi_T)$ vanishes as $\phi_T=\psi_T=\mathcal{N}(\boldsymbol{0},\rmI)$ by construction, assuming the true scores are available. In practice, we rely on a score estimator $\boldeps_\theta$ obtained via score matching approaches. Theorem \ref{thm:kl_to_eps} suggests that the scores of the marginal distributions along the ODE path serve as a proxy for the KL divergence: as $\KL(\phi_0 \| \psi_0)$ increases, so should the difference in the norms of their scores. Another interpretation is that this difference, $\mathbb{E}[\norm{ \boldeps_\phi(\rvx_t, t) - \boldeps_\psi(\rvx_t,t)}^2_2]$, is a measure of the Fisher divergence between the two distributions, which forms the foundation for score matching~\cite{hyvarinen2005estimation}. This motivates using the norm of the scores as a  statistic for distinguishing two distributions. 

However, Theorem \ref{thm:kl_to_eps} is not immediately useful as it requires a priori knowledge of both distributions, whereas in unsupervised OOD detection, only knowledge of the inlier distribution is available. Interestingly, we empirically observe that it is possible to approximate the forward probability flow ODE for different distributions using a \textit{single} diffusion model. Recall that as the PF ODE has the same marginal as the forward SDE, if the score estimate $\boldeps_\theta$ has converged to the true score, then forward integration of a sample $\rvx_0$ using Eq. \ref{eq:pf_ode_reparam} should bring the sample to approximately standard normal, $\rvx_T \sim \mathcal{N}(\boldsymbol{0},\rmI)$.

Specifically, we consider the following parameterization~\cite{dockhorn2022genie} of the PF ODE
\begin{equation}\label{eq:ddim_ode}
    \d\bar{\rvx}_t = \boldeps_\theta(\rvx_t, t) \d\gamma_t
\end{equation}
where $\gamma_t = \sqrt{\frac{1-\bar{\alpha}_t^2}{\bar{\alpha}_t}}$ and $\bar{\rvx}_t = \rvx_t \sqrt{1+\gamma_t^2}$. Let $\boldeps_\theta(\rvx_t, t) = -\sigma_t \nabla_\rvx \log p_t(\rvx)$ be a score model trained on $p_0(\rvx)$. It is known that the DDIM sampler~\cite{song2020denoising} is Euler's method applied to Eq.~\ref{eq:ddim_ode}. In Fig. \ref{fig:ddim_forward}, we integrate Eq. \ref{eq:ddim_ode} forward in time using DDIM for samples from various distributions, most of which are unseen by the model during training. Qualitatively, we observe the surprising fact that both the ImageNet and CelebA models are able to bring the samples approximately to the standard normal. We ablate the choice of $p_0$ in Sec.~\ref{sec:ablations}.

\begin{figure}
    \centering
    \includegraphics[width=\textwidth]{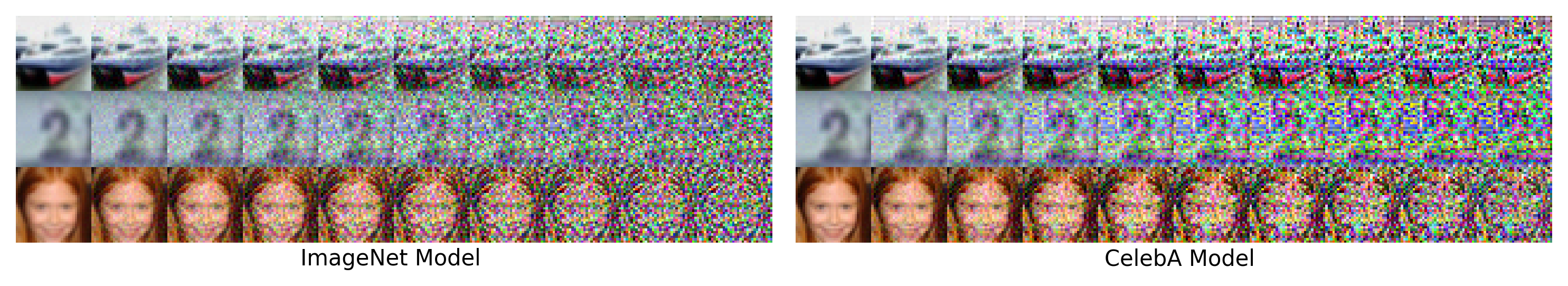}
    \caption{Illustration of the forward integration of Eq. ~\ref{eq:ddim_ode} on samples from CIFAR10, SVHN and CelebA. Both the ImageNet and CelebA models are able bring the samples approximately to standard normal. Other than the case where the CelebA model is used to integrate CelebA samples (last row of the right figure), the samples shown here have not been seen by the models during training. While in certain cases the end result appears to contain features of the original image, thus deviating from an isotropic Gaussian (e.g., first row of the right figure), empirically we find that the scores remain accurate enough for outlier detection; see Sec. \ref{sec:exp} for quantitative results.}
    \label{fig:ddim_forward}
\end{figure}

{
\centering
\begin{algorithm}[t]
\small
\caption{OOD detection with \modelname}\label{alg:method}
\begin{algorithmic}[1]
\Require Trained DM $\boldeps_\theta$, ID train set $\rmX_{\textrm{train}}$, test samples $\rmX_{\textrm{test}}$, empty lists $L_{\textrm{train}}$ and $L_{\textrm{test}}$
\Ensure OOD scores of test samples $S_\theta(\rmX_{\textrm{test}})$ 
\For{$\rvx_0$ in $\rmX_{\textrm{train}}$}
\State $\{\boldeps_\theta(\rvx_t,t)\}_{t=0}^T \gets \texttt{DDIMInversion}(\rvx_0, \boldeps_\theta)$ \Comment{Integrate Eq.~\ref{eq:ddim_ode} from $t=0$ to $T$}
\State Calculate OOD statistic using $\{\boldeps_\theta(\rvx_t,t)\}_{t=0}^T$
\State Append statistic to $L_{\textrm{train}}$
\EndFor
\State $p_{\textrm{train}}(\cdot) \gets$ fit density estimate to $L_{\textrm{train}}$ \Comment{e.g., KDE, GMM}
\State $L_{\textrm{test}} \gets$ Repeat lines $1-5$ with $\rmX_{\textrm{test}}$\\
\Return $p_{\textrm{train}}(l)$ for every $l$ in $L_{\textrm{test}}$
\end{algorithmic}
\end{algorithm}
\par
}

This motivates $\boldeps_\theta$ as a replacement for arbitrary $\boldeps_\phi$ when integrating Eq. \ref{eq:ddim_ode} forward with samples from $\phi_0$. In Fig. \ref{fig:ll_eps_lipschitz}, we see that the distributions of \scalebox{0.85}{$\sqrt{\sum_t \norm{\boldeps_\theta(\rvx_t,t)}_2^2}$}, the square root of the sum of $L^2$ norms of scores over time, applied to the two datasets using a single model trained on ImageNet are better separated than the likelihoods. Note that Theorem \ref{thm:kl_to_eps} tells us only that the score norms of inlier and outlier samples are \textit{different}, not whether one is higher or lower than the other. Thus, we propose the following OOD detection scheme: fit a Kernel Density Estimator (KDE) to \scalebox{0.85}{$\sqrt{\sum_t \norm{\boldeps_\theta(\rvx_t,t)}_2^2}$} of the training set for a given distribution, then use the KDE likelihoods of a test sample as the OOD score $S_\theta$. This way, the likelihoods of outlier samples under the KDE are \textit{lower} than for inlier samples, allowing us to compute the AUROC. We provide pseudocode in Algorithm~\ref{alg:method}. With this scheme, the AUROC scores in Table~\ref{tab:ll_eps_lipschitz} show a large improvement over likelihoods; however, we would like to pursue even better OOD statistics, which we discuss in Sec.~\ref{sec:second-order-taylor}.

\paragraph{Score as First-Order Taylor Expansion.} We provide a second interpretation of the score, and subsequently motivate a new statistic that can be used for OOD detection. Recall that the numerical DDIM solver is the first-order Euler's method applied to Eq. \ref{eq:ddim_ode}. In general, we can expand the ODE to higher-order terms using the truncated Taylor method~\cite{kloeden1992,dockhorn2022genie}:
\begin{align}
\begin{split}
    \bar{\rvx}_{t_{n+1}} &= \bar{\rvx}_{t_n} + h_n \frac{\d\bar{\rvx}_{t}}{\d\gamma_t}\big|_{(\bar{\rvx}_{t_n}, t_n)} + \frac{1}{2!}h^2_n \frac{\d^2\bar{\rvx}_{t}}{\d\gamma_t^2}\big|_{(\bar{\rvx}_{t_n}, t_n)} + \dots \\
    &= \bar{\rvx}_{t_n} + h_n \boldeps_\theta(\rvx_{t_n}, t_n) + \frac{1}{2!}h^2_n \frac{\d\boldeps_\theta}{\d\gamma_t}\big|_{(\bar{\rvx}_{t_n}, t_n)} + \dots 
\end{split}
\end{align}
where $h_n = \gamma_{t_{n+1}} - \gamma_{t_n}$. The norm of the first-order score, $\norm{\boldeps_\theta}_2$, therefore measures the \textit{rate-of-change} of the ODE integration path. Intuitively, the ODE integration path necessary to bring different distributions to the standard normal in finite time differs (c.f. the PF ODE path is also the optimal transport path, see Sec.~\ref{sec:ot}) , hence the rate-of-change differs as well.

\subsection{Second-Order Taylor Expansion (\modelname{}-1D)}\label{sec:second-order-taylor}
Based on the preceding discussion, it is natural to consider if higher-order terms in the ODE Taylor expansion can also serve as an effective OOD statistic. We answer in the affirmative by considering the second order term, $\frac{\d\boldeps_\theta}{\d\gamma_t}$. Intuitively, the second-order term measures the \textit{curvature} of the ODE integration path. We expand the derivative as follows~\cite{dockhorn2022genie}:
\begin{align}
\begin{split}
    \frac{\d\boldeps_\theta}{\d\gamma_t} &= \frac{\partial \boldeps_\theta(\rvx_t, t)}{\partial \rvx_t}\frac{\d\rvx_t}{\d\gamma_t} + \frac{\partial \boldeps_\theta(\rvx_t, t)}{\partial t}\frac{\d t}{\d\gamma_t} \\
    &= \frac{1}{\sqrt{\gamma_t^2 + 1}} \underbrace{\frac{\partial \boldeps_\theta(\rvx_t, t)}{\partial \rvx_t}\boldeps_\theta(\rvx_t, t)}_{\textrm{JVP}} - \frac{\gamma_t}{1+\gamma_t^2} \underbrace{\frac{\partial \boldeps_\theta(\rvx_t, t)}{\partial \rvx_t} \rvx_t}_{\textrm{JVP}} + \frac{\partial \boldeps_\theta(\rvx_t, t)}{\partial t}\frac{\d t}{\d\gamma_t}.
\end{split}
\end{align}
We see that the derivative contains two Jacobian-Vector Products (JVP) and a simple time derivative term. In principle, all three terms can be computed using automatic differentiation. However, this makes inference twice as costly due to the need for an additional backward pass after every forward pass of the network, and significantly more memory-intensive due to storage of the full computation graph. Fortunately, the time derivative term can be computed using simple finite difference:
\begin{equation}\label{eq:time-deriv}
    \frac{\partial \boldeps_\theta(\rvx_t, t)}{\partial t} \approx \frac{\boldeps_\theta(\rvx_{t + \Delta t}, t+\Delta t) - \boldeps_\theta(\rvx_t, t)}{\Delta t}
\end{equation}
where the pairs $(\rvx_t, \rvx_{t + \Delta t})$ are obtained during standard DDIM integration. Thus, no additional costs associated with gradient computation are incurred.

Surprisingly, we observe that for high-dimensional distributions such as images that we consider in this work, the time derivative in Eq.~\ref{eq:time-deriv} alone provides an accurate enough estimate for OOD detection. Using the same ImageNet model as in Sec.~\ref{sec:scores_kl}, we observe an improvement in AUROC scores in CIFAR10 \textit{vs} SVHN in Table~\ref{tab:ll_eps_lipschitz} when using the second-order statistic. Qualitatively, the distributions of \scalebox{0.85}{$\sqrt{\sum_t \norm{\partial_t \boldeps_\theta(\rvx_t,t)}_2^2}$} are more spread out in Fig.~\ref{fig:ll_eps_lipschitz} as compared to the first-order scores, although the distinction is subtle; the quantitative results provide a more reliable confirmation of the benefit of using the second-order statistic.

We thus consider the second-order statistic alone, \scalebox{0.85}{$\sqrt{\sum_t \norm{\partial_t \boldeps_\theta(\rvx_t,t)}_2^2}$}, as a possible statistic for OOD detection. As it is a one-dimensional quantity, we abbreviate it as \modelname{}-1D. We evaluate \modelname{}-1D in Sec.~\ref{sec:ablations}.
 
\subsection{Connections to Optimal Transport}\label{sec:ot}
Recent works have viewed DDIM integration as an encoder that maps the data distribution to standard normal~\cite{khrulkov2022understanding, su2022dual}. They prove that this map is the optimal transport (OT) path if the data distribution is Gaussian, while providing numerical results suggesting likewise for high-dimensional data like images. As a result, we can view the OOD statistics proposed in Sec.~\ref{sec:scores_kl} and Sec.~\ref{sec:second-order-taylor} as characterizing different derivatives of the OT path: the score $\norm{\boldeps_\theta}_2 $ represents the rate-of-change of the path, while the time derivative $\norm{\partial_t  \boldeps_\theta}_2$ represents its curvature. 

To justify our proposition that derivatives of OT paths serve as meaningful OOD statistics, we consider the following toy example~\cite{khrulkov2022understanding} of distinguishing two multivariate Gaussians (detailed derivation in Appendix~\ref{app:ot}). Let the distributions be $p^i_0(\rvx) \sim \mathcal{N}(\rva_i, \rmI), i \in \{0,1\}$, where $\rva_i \in \mathbb{R}^d \textrm{ and } \rmI \in \mathbb{R}^{d\times d}$. The marginal densities along the forward diffusion can be computed exactly using the transition formulas for SDEs~\cite{sarkka2019applied} and is given by $p^i_t(\rvx) \sim \mathcal{N}(\rva_i e^{-t}, \rmI)$, with PF ODE $\frac{\d\rvx_i}{\d t}=-\rva_i e^{-t}$. This path corresponds exactly to the OT map between $p^i_0$ and $\mathcal{N}(\boldsymbol{0},\rmI)$. In this case, the corresponding first and second-order OOD statistics are equal and given by $\norm{\frac{\d\rvx_i}{\d t}}_2 = \norm{\frac{\d^2\rvx_i}{\d t^2}}_2 =\norm{\rva_i e^{-t}}_2$. Crucially, they are proportional to $\norm{\rva_i}_2$, meaning that as the two distributions move farther apart (i.e., as $\norm{\rva_0 - \rva_1}_2$ increases), so should the $L^2$ norms of the OOD statistics, thereby increasing their ability to distinguish samples between the two.

\subsection{Higher-dimensional Statistic (\modelname{}-6D)}\label{sec:higher_order_statistic}
\begin{wraptable}{r}{3cm}
\vspace{-0.4cm}
\caption{AUROC of \modelname{} 1D \textit{vs} 6D.}
\label{tab:neg-cifar10}
\vspace{-0.2cm}
\begin{tabular}{@{}cc@{}}
\toprule
Method & \begin{tabular}[c]{@{}c@{}}C10 \textit{vs} \\ neg. C10\end{tabular} \\ \midrule
1D     & 0.500                                                      \\
6D     & 0.994                                                      \\ \bottomrule
\end{tabular}
\vspace{-0.1cm}
\end{wraptable}
Owing to its simplicity, the one-dimensional statistic proposed in Sec.~\ref{sec:second-order-taylor} may suffer from edge cases or perform suboptimally as information is condensed to a single scalar. For instance, given an image $\rvx_0$ with pixels normalized to the range $[-1,1]$, one such edge case is distinguishing $\rvx_0$ from itself with the sign of the pixels flipped, $-\rvx_0$. The two samples will produce symmetric OT paths differing only by a negative sign, resulting in identical statistics after taking the $L^2$ norm. We can see this from Table~\ref{tab:neg-cifar10} where \modelname{}-1D fails to distinguish CIFAR10 samples from itself with signs flipped, which we call negative CIFAR10. As such, we propose a higher-dimensional statistic that does not utilize the standard form of the $L^p$ norm: $\norm{\rvx}_p = \sum_i|\rvx_i|^p$. We define a new scalar quantity, $\langle \rvx \rangle_p = \sum_i (\rvx_i)^p$, which retains the sign information, and propose a new six-dimensional statistic we dub \modelname{}-6D:
\begin{equation*}
\resizebox{\textwidth}{!}{
 $\Big[ \sum_t \langle \boldeps_\theta(\rvx_t,t)\rangle_1, \sum_t \langle \boldeps_\theta(\rvx_t,t)\rangle_2, \sum_t \langle \boldeps_\theta(\rvx_t,t)\rangle_3, \sum_t \langle \partial_t \boldeps_\theta(\rvx_t,t)\rangle_1, \sum_t \langle \partial_t \boldeps_\theta(\rvx_t,t)\rangle_2, \sum_t \langle \partial_t \boldeps_\theta(\rvx_t,t)\rangle_3 \Big]^\top$}
\end{equation*}
which concatenates scalars based on the first, second and third powers of $\boldeps_\theta$ and $\partial_t \boldeps_\theta$ into a vector. From Table~\ref{tab:neg-cifar10}, \modelname{}-6D is able to distinguish CIFAR10 from neg. CIFAR10 near perfectly. We validate both \modelname{}-1D and \modelname{}-6D on a wider suite of experiments in Sec.~\ref{sec:exp}.

\section{Related Works}
Modern OOD detection can be divided roughly into three categories: feature-based, likelihood-based, reconstruction-based. Feature-based methods extract features from inlier samples and fit a likelihood or distance function as an OOD detector. For instance, one can obtain the latent representations of a test sample using an autoencoder and measure its Mahalanobis distance to the representations of inlier samples~\cite{denouden2018improving}. Distances between features derived from self-supervised learning models are also utilized in similar contexts~\cite{tack2020csi,sehwag2021ssd}. Similar to our work, \citet{xiao2021we} showed that one can perform OOD detection using features from a single discriminative model. 

Likelihood-based approaches leverage a generative model trained on inlier samples. These methods typically employ variants of the log-likelihood of a test sample under the model as the OOD detection score. \citet{nalisnick2018deep} first pointed out that deep generative models might erroneously assign higher likelihoods to outlier samples. Several explanations have been proposed, such as the input complexity~\cite{serra2019input} and typicality~\cite{nalisnick2019detecting} of samples. As a result, just as we show in Sec.~\ref{sec:ll_does_not_work}, vanilla likelihoods are rarely used. Instead, variants derived from the log-likelihood have been proposed, such as likelihood ratios~\cite{ren2019likelihood}, ensembles of the likelihood~\cite{choi2018waic}, density of states~\cite{morningstar2021density}, energy-based models~\cite{du2019implicit} and typicality tests~\cite{nalisnick2019detecting}. Diffusion Time Estimation~\cite{livernochediffusion} estimates the distribution over the diffusion time of a noisy test sample and uses the mean or mode as the OOD score. MSMA~\cite{mahmood2020multiscale} uses the score function over discrete noise levels for OOD detection. One can view MSMA as a specific case of \modelname{} which only utilizes the first-order statistic, while we generalize to higher-order terms. MSMA proposes to measure the score at various noise levels, while our method sums over the entire diffusion path. It is worth emphasizing that MSMA requires different models for different inlier distributions, unlike our single model setup.

Reconstruction-based approaches evaluate how well a generative model, trained on in-distribution data, can reconstruct a test sample. Earlier works utilize autoencoders~\cite{zhou2017anomaly} and GANs~\cite{schlegl2017unsupervised} for reconstruction. Recent works have adapted unconditional DMs to this approach due to its impressive sample quality. A test sample is first artificially corrupted before being reconstructed using the DM's sampling process. DDPM-OOD~\cite{graham2023denoising} noises a sample using the forward process and evaluates the perceptual quality~\cite{zhang2018unreasonable} of the reconstructed sample. Projection Regret~\cite{choi2024projection} adopts a similar approach, but uses a Consistency Model~\cite{song2023consistency} and introduces an additional projection regret score. LMD~\cite{liu2023unsupervised} corrupts the image by masking and reconstructs the sample via inpainting. Evidently, \modelname{} differs from these diffusion approaches as we do not utilize reconstructions. We also stress again that these baselines require different models for different inlier tasks.
\section{Experiments}\label{sec:exp}
Based on our earlier analysis, we hypothesize that DiffPath can be utilized for OOD detection across diverse tasks using a single model. In this section, we validate our hypothesis with comprehensive experiments across numerous pairwise OOD detection tasks and compare DiffPath's performance against state-of-the-art baselines.

\begin{table}
\caption{AUROC scores for various in-distribution \textit{vs} out-of-distribution tasks. Higher is better. DiffPath-6D-ImageNet and DiffPath-6D-CelebA denote our method using diffusion models trained with ImageNet and CelebA as base distributions respectively. \textbf{Bold} and {\ul underline} denotes the best and second best result respectively. We also show the number of function evaluations (NFE) for diffusion methods, where lower is better.}
\label{tab:main-exp}
\resizebox{\columnwidth}{!}{%
\addtolength{\tabcolsep}{-0.4em}
\renewcommand{\arraystretch}{1.3} 
\begin{tabular}{@{}ccccccccccccccc@{}}
\toprule
\multicolumn{1}{l}{} &
  \multicolumn{4}{c}{C10 \textit{vs}} &
  \multicolumn{4}{c}{SVHN \textit{vs}} &
  \multicolumn{4}{c}{CelebA \textit{vs}} &
   &
  \multicolumn{1}{l}{} \\ \cmidrule(lr){2-5}\cmidrule(lr){6-9}\cmidrule(lr){10-13} 
Method &
  SVHN &
  CelebA &
  C100 &
  Textures &
  C10 &
  CelebA &
  C100 &
  Textures &
  C10 &
  SVHN &
  C100 &
  Textures &
  Average &
  NFE \\ \midrule
IC &
  0.950 &
  0.863 &
  {\ul 0.736} &
  - &
  - &
  - &
  - &
  - &
  - &
  - &
  - &
  - &
  - &
  - \\
IGEBM &
  0.630 &
  0.700 &
  0.500 &
  0.480 &
  - &
  - &
  - &
  - &
  - &
  - &
  - &
  - &
  - &
  - \\
VAEBM &
  0.830 &
  0.770 &
  0.620 &
  - &
  - &
  - &
  - &
  - &
  - &
  - &
  - &
  - &
  - &
  - \\
Improved CD &
  0.910 &
  - &
  \textbf{0.830} &
  0.880 &
  - &
  - &
  - &
  - &
  - &
  - &
  - &
  - &
  - &
  - \\
DoS &
  0.955 &
  {\ul 0.995} &
  0.571 &
  - &
  0.962 &
  \textbf{1.00} &
  0.965 &
  - &
  0.949 &
  {\ul 0.997} &
  0.956 &
  - &
  0.928 &
  - \\
WAIC\footnotemark &
  0.143 &
  0.928 &
  0.532 &
  - &
  0.802 &
  0.991 &
  0.831 &
  - &
  0.507 &
  0.139 &
  0.535 &
  - &
  0.601 &
  - \\
TT\footnotemark[1] &
  0.870 &
  0.848 &
  0.548 &
  - &
  0.970 &
  \textbf{1.00} &
  0.965 &
  - &
  0.634 &
  0.982 &
  0.671 &
  - &
  0.832 &
  - \\
LR\footnotemark[1] &
  0.064 &
  0.914 &
  0.520 &
  - &
  0.819 &
  0.912 &
  0.779 &
  - &
  0.323 &
  0.028 &
  0.357 &
  - &
  0.524 &
  - \\
  \midrule
\multicolumn{15}{c}{\cellcolor[HTML]{E8E8E8}\textit{Diffusion-based}} \\
\midrule
  NLL &
  0.091 &
  0.574 &
  0.521 &
  0.609 &
  \textbf{0.990} &
  {\ul 0.999} &
  \textbf{0.992} &
  {\ul 0.983} &
  0.814 &
  0.105 &
  0.786 &
  0.809 &
  0.689 &
  1000 \\
  IC &
  0.921 &
  0.516 &
  0.519 &
  0.553 &
  0.080 &
  0.028 &
  0.100 &
  0.174 &
  0.485 &
  0.972 &
  0.510 &
  0.559 &
  0.451 &
  1000 \\
MSMA &
  {\ul 0.957} &
  \textbf{1.00} &
  0.615 &
  \textbf{0.986} &
  {\ul 0.976} &
   0.995 &
  {\ul 0.980} &
  \textbf{0.996} &
  0.910 &
  {\ul 0.996} &
  0.927 &
  \textbf{0.999} &
  \textbf{0.945} &
  \textbf{10} \\
DDPM-OOD &
  0.390 &
  0.659 &
  0.536 &
  0.598 &
  0.951 &
  0.986 &
  0.945 &
  0.910 &
  0.795 &
  0.636 &
  0.778 &
  0.773 &
  0.746 &
  {\ul 350} \\
LMD &
  \textbf{0.992} &
  0.557 &
  0.604 &
  0.667 &
  0.919 &
  0.890 &
  0.881 &
  0.914 &
  {\ul 0.989} &
  \textbf{1.00} &
  {\ul 0.979} &
  {\ul 0.972} &
  0.865 &
  $10^4$ \\
\midrule
\multicolumn{15}{c}{\cellcolor[HTML]{E8E8E8}\textit{Ours}} \\
\midrule
\modelname{}-6D-ImageNet  & 0.856          & 0.502 & 0.580 & 0.841 & 0.943 & 0.964 & 0.954 & 0.969 & 0.807 & 0.981 & 0.843 & 0.964 & 0.850 & \textbf{10} \\
\modelname{}-6D-CelebA & 0.910          & 0.897 & 0.590 & {\ul 0.923} & 0.939 & 0.979 & 0.953 & 0.981 & \textbf{0.998} & \textbf{1.00}  & \textbf{0.998} & \textbf{0.999} & {\ul 0.931} & \textbf{10} \\ \bottomrule
\end{tabular}}
\end{table}
\paragraph{Datasets.} All experiments are conducted as of pairwise OOD detection tasks using CIFAR10 (C10), SVHN, and CelebA as inlier datasets, and CIFAR100 (C100) and Textures as additional outlier datasets. Unconditional diffusion models are employed at resolutions of $32\times32$ and $64\times64$. The model utilizing ImageNet as the base distribution is trained at $64\times64$ resolution, while all other models are trained at $32\times32$.

\paragraph{Methodology and Baselines.} Our methodology features two variants of our model, \modelname{}-1D using KDE and \modelname{}-6D using a Gaussian Mixture Model for OOD scoring, as outlined in Sec.~\ref{sec:method}. We compare against a variety of generative baselines, including Energy-based Model (EBM) such as IGEBM~\cite{du2019implicit}, VAEBM~\cite{xiao2020vaebm} and Improved CD~\cite{du2021improved}, as well as Input Complexity (IC)~\cite{serra2019input}, Density of States (DOS)~\cite{morningstar2021density}, Watanabe-Akaike Information Criterion (WAIC)~\cite{choi2018waic}, Typicality Test (TT)~\cite{nalisnick2019detecting} and Likelihood Ratio (LR)~\cite{ren2019likelihood} applied to the Glow~\cite{kingma2018glow} model. Additionally, we compare against diffusion baselines such as vanilla NLL and IC based on the DM's likelihoods and re-implementations of DDPM-OOD~\cite{graham2023denoising}, LMD~\cite{liu2023unsupervised}, and MSMA~\cite{mahmood2020multiscale} based on open-source code for full comparisons.

\begin{table}[]
\centering
\caption{Ablation results when we vary $p_0(\rvx)$, the distribution the single DM is trained on. We use \modelname{}-6D with 10 NFEs. Random denotes a randomly initialized model.}
\label{tab:ablation_q}
\resizebox{\columnwidth}{!}{%
\addtolength{\tabcolsep}{-0.4em}
\renewcommand{\arraystretch}{1.4} 
\begin{tabular}{@{}cccccccccccccc@{}}
\toprule
\multicolumn{1}{l}{} &
  \multicolumn{4}{c}{C10 \textit{vs}} &
  \multicolumn{4}{c}{SVHN \textit{vs}} &
  \multicolumn{4}{c}{CelebA \textit{vs}} &
   \\ 
\cmidrule(lr){2-5}\cmidrule(lr){6-9}\cmidrule(lr){10-13}  $q_0(\rvx)$ &
  SVHN &
  CelebA &
  C100 &
  Textures &
  C10 &
  CelebA &
  C100 &
  Textures &
  C10 &
  SVHN &
  C100 &
  Textures &
  Average \\ \midrule
C10          & \textbf{0.939} & 0.484 & \textbf{0.604} & 0.870 & {\ul 0.961} & 0.961 & {\ul 0.973} & {\ul 0.982} & 0.719 & 0.997 & 0.796 & 0.950 & {\ul 0.853} \\
SVHN         & 0.742 & 0.482 & 0.579 & {\ul 0.872} & \textbf{0.991} & \textbf{0.994} & \textbf{0.992} & \textbf{0.989} & 0.706 & 0.974 & 0.769 & 0.961 & 0.838 \\
CelebA       & {\ul 0.910} & \textbf{0.897} & {\ul 0.590} & \textbf{0.923} & 0.939 & {\ul 0.979} & 0.953 & 0.981 & \textbf{0.998} & \textbf{1.00}  & \textbf{0.998} & \textbf{0.999} & \textbf{0.931} \\
ImageNet     & 0.856 & {\ul 0.502} & 0.580 & 0.841 & 0.943 & 0.964 & 0.954 & 0.969 & {\ul 0.807} & {\ul 0.981} & {\ul 0.843} & {\ul 0.964} & 0.850 \\
Random & 0.338 & 0.426 & 0.538 & 0.31  & 0.665 & 0.592 & 0.693 & 0.471 & 0.577 & 0.411 & 0.612 & 0.381 & 0.501 \\ \bottomrule
\end{tabular}}
\end{table}

\subsection{Main Results}\label{sec:main-results}
Table~\ref{tab:main-exp} summarizes our main results. Here, we report outcomes for DiffPath-6D using ImageNet and CelebA as base distributions. DiffPath-6D-CelebA achieves an average AUROC of 0.931, comparable to the leading diffusion-based approach MSMA and outperforming all other baselines, while utilizing only a single model. Similar to MSMA, we attain this result using 10 NFEs, significantly surpassing other diffusion baselines that require an order of magnitude or more NFEs. %
When using ImageNet as the base distribution, the average AUROC of 0.850 is competitive with LMD, which requires several magnitudes more NFEs due to multiple reconstructions. This is despite the ImageNet model never having seen any samples from the evaluated distributions during training.

\footnotetext{Results obtained from~\citet{morningstar2021density}.}

The empirical results indicate that the CelebA-based model outperforms the ImageNet-based model primarily due to its superior performance on tasks involving CelebA samples, whether they are in-distribution or out-of-distribution. For instance, DiffPath-6D-CelebA achieves near-perfect performance on all tasks where CelebA is in-distribution (rightmost columns), and in the CIFAR10 \versus CelebA task. On tasks that do not involve CelebA samples, the two models exhibit roughly comparable performance. This suggests that distinguishing CelebA from other samples is particularly challenging, and that DiffPath benefits from exposure to inlier samples from the respective distributions during training. Next, we discuss the effect of the base datasets and other design considerations via ablations.

\begin{table}[]
\centering
\caption{Ablation on the number of DDIM steps (NFE). We use \modelname{}-6D-CelebA.}
\label{tab:ablation_nfe}
\resizebox{\columnwidth}{!}{%
\addtolength{\tabcolsep}{-0.4em}
\renewcommand{\arraystretch}{1.3}
\begin{tabular}{@{}cccccccccccccc@{}}
\toprule
\multicolumn{1}{l}{} &
  \multicolumn{4}{c}{C10 \textit{vs}} &
  \multicolumn{4}{c}{SVHN \textit{vs}} &
  \multicolumn{4}{c}{CelebA \textit{vs}} &
   \\ 
\cmidrule(lr){2-5}\cmidrule(lr){6-9}\cmidrule(lr){10-13} NFEs &
  SVHN &
  CelebA &
  C100 &
  Textures &
  C10 &
  CelebA &
  C100 &
  Textures &
  C10 &
  SVHN &
  C100 &
  Textures &
  Average \\ \midrule
\rowcolor[HTML]{FFFFFF} 
5  & \textbf{0.916} & \textbf{0.928} & {\ul 0.584} & 0.900 & \textbf{0.955} & 0.940  & \textbf{0.960}  & {\ul 0.973} & \textbf{0.999} & \textbf{1.00} & \textbf{0.998} & {\ul 0.997} & {\ul 0.929} \\
\rowcolor[HTML]{FFFFFF} 
10 & {\ul 0.910} & {\ul 0.897} & \textbf{0.590} & {\ul 0.923} & {\ul 0.939} & 0.979 & {\ul 0.953} & \textbf{0.981} & {\ul 0.998} & \textbf{1.00} & \textbf{0.998} & \textbf{0.999} & \textbf{0.931} \\
\rowcolor[HTML]{FFFFFF} 
25 & 0.898 & 0.866 & 0.578 & \textbf{0.933} & 0.882 & \textbf{0.997} & 0.906 & 0.979 & 0.996 & \textbf{1.00} &  {\ul 0.995} & 0.996 & 0.919 \\
\rowcolor[HTML]{FFFFFF} 
50 & 0.896 & 0.831 & 0.575 & 0.931 & 0.853 & {\ul 0.996} & 0.879 & 0.974 & 0.991 & \textbf{1.00} & 0.991 & {\ul 0.997} & 0.910 \\ \bottomrule
\end{tabular}}
\end{table}

\begin{table}[]
\centering
\caption{Ablation results comparing DiffPath-1D and DiffPath-6D using models trained with ImageNet and CelebA as base distributions. }
\label{tab:ablation_1d_6d}
\resizebox{\columnwidth}{!}{%
\addtolength{\tabcolsep}{-0.4em}
\renewcommand{\arraystretch}{1.3} 
\begin{tabular}{@{}cccccccccccccc@{}}
\toprule
\multicolumn{1}{l}{} &
  \multicolumn{4}{c}{C10 \textit{vs}} &
  \multicolumn{4}{c}{SVHN \textit{vs}} &
  \multicolumn{4}{c}{CelebA \textit{vs}} &
   \\
\cmidrule(lr){2-5}\cmidrule(lr){6-9}\cmidrule(lr){10-13} Method &
  SVHN &
  CelebA &
  C100 &
  Textures &
  C10 &
  CelebA &
  C100 &
  Textures &
  C10 &
  SVHN &
  C100 &
  Textures &
  Average \\ \midrule
DiffPath-1D-ImageNet & \textbf{0.965} & 0.394 & {\ul 0.551} & 0.685 & \textbf{0.971} & \textbf{0.986} & \textbf{0.972} & 0.949 & 0.693 & {\ul 0.991} & 0.721 & 0.797 & 0.806 \\
DiffPath-6D-ImageNet & 0.856 & 0.502 & \textbf{0.580} & {\ul 0.841} & 0.943 & 0.964 & {\ul 0.954} & {\ul 0.969} & 0.807 & 0.981 & 0.843 & {\ul 0.964} & {\ul 0.850} \\
DiffPath-1D-CelebA   & {\ul 0.956} & {\ul 0.811} & 0.545 & 0.688 & {\ul 0.948} & 0.690 & 0.933 & 0.932 & {\ul 0.899} & 0.666 & {\ul 0.881} & 0.911 & 0.822 \\
DiffPath-6D-CelebA   & 0.910 & \textbf{0.897} & 0.59  & \textbf{0.923} & 0.939 & {\ul 0.979} & 0.953 & \textbf{0.981} & \textbf{0.998} & \textbf{1.00}  & \textbf{0.998} & \textbf{0.999} & \textbf{0.931} \\ \bottomrule
\end{tabular}}
\end{table}

\subsection{Ablations}\label{sec:ablations}

\paragraph{Choice of Diffusion Training Set.}
We investigate the impact of the base dataset on the performance of DiffPath-6D. In Table \ref{tab:ablation_q}, we compare four different base distributions alongside a randomly initialized model. As a baseline check, we observe that the average AUROC of the randomly initialized model is 0.501, which is consistent with random guessing. This indicates that training on a base distribution is essential for the model to learn features for effective OOD detection.

Our ablations include CIFAR10 and SVHN as base distributions, in addition to CelebA and ImageNet shown in Table \ref{tab:main-exp}. Among these four base distributions, CelebA yields the best performance overall. Notably, the models trained on SVHN and CelebA demonstrate superior results when the inlier data aligns with their respective training distributions. This supports the established principle of training models on in-distribution samples, and we show that DiffPath-6D similarly benefits from such training. However, we underscore the key finding of our work: while in-distribution training enhances performance, it is not strictly necessary. DiffPath-6D exhibits strong performance even on tasks involving samples from distributions that the model has not encountered during training.

\paragraph{DiffPath 1D \textit{vs} 6D.} 
Here we ablate on the choice of DiffPath-1D and DiffPath-6D. Table \ref{tab:ablation_1d_6d} shows that DiffPath-6D performs better than its 1D counterpart for both choices of base distributions. We attribute this to the increased robustness of aggregating multiple statistics, c.f. Sec.~\ref{sec:higher_order_statistic}, and recommend practitioners to use DiffPath-6D in general. However, DiffPath-1D outperforms DiffPath-6D in certain instances. For instance, for the ImageNet model, DiffPath-1D achieves the best performance on CIFAR10 \textit{vs} SVHN and in three out of four tasks where SVHN is the inlier distribution. 

\paragraph{Number of DDIM Steps.}
We investigate how the performance of our method varies with the number of NFEs (DDIM steps) in Table~\ref{tab:ablation_nfe}. Overall, the changes in average AUROC are relatively minor as the NFEs are varied, suggesting that DiffPath is robust to the number of integration steps. The best result is obtained at 10 NFEs. While  the finite difference approximation of the derivative (Eq.~\ref{eq:time-deriv}) should become more accurate as the number of NFEs increases, the aggregation of multiple statistics involving scores and its derivatives makes this effect less pronounced. We leave the investigation of this phenomena in greater detail to future work.

\begin{table}[t]
\centering
\caption{Difference in performance when the incorrect resizing technique is used, which leads to overly optimistic results. Results on DiffPath-6D with ImageNet $64\times64$ as the base distribution. The results with asterisk (*) denote the scores that have been computed inaccurately.}
\label{tab:ablation_resizing}
\resizebox{\columnwidth}{!}{%
\addtolength{\tabcolsep}{-0.4em}
\renewcommand{\arraystretch}{1.3} 
\begin{tabular}{@{}cccccccccccccc@{}}
\toprule
\multicolumn{1}{l}{} &
  \multicolumn{4}{c}{C10 \textit{vs}} &
  \multicolumn{4}{c}{SVHN \textit{vs}} &
  \multicolumn{4}{c}{CelebA \textit{vs}} &
   \\
\cmidrule(lr){2-5}\cmidrule(lr){6-9}\cmidrule(lr){10-13} Correct Resizing &
  SVHN &
  CelebA &
  C100 &
  Textures &
  C10 &
  CelebA &
  C100 &
  Textures &
  C10 &
  SVHN &
  C100 &
  Textures &
  Average \\ \midrule
No  & 0.856 & 0.999* & 0.580 & 0.999* & 0.943 & 1.00* & 0.954 & 1.00* & 0.998* & 1.00* & 0.998* & 0.981 & 0.942 \\
Yes & 0.856 & 0.502  & 0.580 & 0.841  & 0.943 & 0.964 & 0.954 & 0.969 & 0.807  & 0.981 & 0.843  & 0.964 & 0.850 \\ \bottomrule
\end{tabular}}
\vspace{-0.4cm}
\end{table}
\subsection{Proper Image Resizing with a Single Model}\label{sec:resizing}

Using a single DM with a fixed input resolution necessitates resizing all images to match the model's resolution during evaluation. However, when datasets have differing original resolutions, naive resizing—upsampling lower-resolution images and downsampling higher-resolution ones—can lead to evaluation inaccuracies. Specifically, upsampling introduces blurriness due to pixel interpolation, while downsampling does not. This discrepancy allows the model to differentiate samples based on image blur rather than semantic content, resulting in overly optimistic performance metrics.

For instance, when evaluating DiffPath-6D trained at $64 \times 64$ pixels on the CelebA \versus CIFAR10 task, CIFAR10 images are upsampled (introducing blur) while CelebA images are downsampled. This imbalance enables trivial distinction between the samples. As illustrated in the first row of Table~\ref{tab:ablation_resizing}, tasks where only one distribution undergoes upsampling yield artificially high AUROC scores.

To mitigate this issue, we propose equalizing the resizing process by first downsampling higher-resolution images to the lower resolution of the other distribution, then upsampling both to the model's required resolution. In the CelebA \versus CIFAR10 example, CelebA images are downsampled to $32 \times 32$ pixels before both samples are upsampled to $64 \times 64$ pixels. This method ensures consistent blurring effects across all samples, reducing confounding factors. The second row of Table~\ref{tab:ablation_resizing} demonstrates more accurate evaluations using this approach. We adopt this resizing procedure in all relevant experiments to ensure fair comparisons. In short, we highlight the importance of consistent image resizing for fair evaluation in OOD detection, which we hope will guide future research.

\subsection{Near-OOD Tasks}
Near-OOD tasks refer to setups where the inlier and outlier samples are semantically similar, making them challenging for most methods. From Table~\ref{tab:main-exp}, DiffPath, like most baselines, does not perform strongly on near-OOD tasks like CIFAR10 \versus CIFAR100. This motivated us to conduct further near-OOD experiments, the results of which are presented in Table~\ref{tab:near-ood} of Sec.~\ref{app:near-ood} of the appendix. Note that near-OOD tasks are not a standard evaluation on generative methods.
We defer detailed analysis of the results to the appendix, and leave further investigations on near-OOD tasks to future work.

\section{Conclusion}
In this work, we proposed Diffusion Paths (\modelname{}), a method of OOD detection using a single diffusion model by characterizing properties of the forward diffusion path. In light of the growing popularity of generative foundation models, our work demonstrates that a single diffusion model can also be applied to OOD detection. There are several interesting future directions that arise from our work; for instance, applying \modelname{} to other modalities such as video, audio, language, time series and tabular data, as well as investigating if higher-order terms of the Taylor expansion, or leveraging different instantiations of the PF ODE might lead to better performance. 

\paragraph{Limitations and Future Work.} We only calculate the simple time derivative and found that it works well experimentally, although one might compute the full derivative to quantify the curvature completely. We leave this for future work. Also, we consider DMs trained on natural images like CelebA and ImageNet, which may not be appropriate for specialized applications such as medical images. For such purposes, one could consider including domain-specific data during training.
\section{Acknowledgements}
This research is supported by the National Research Foundation Singapore and DSO National Laboratories under the AI Singapore Programme (AISG Award No: AISG2-RP-2020-017). AHT acknowledges support from the Singaporean Ministry of Education Grant MOE-000537-01 and MOE-000618-01. We would like to thank Pranav Goyal for help with the experiments. 
{
\small
\bibliography{refs}
}

\newpage
\appendix
\begin{center} \Large \textbf{Supplementary Material for ``Out-of-Distribution Detection with a Single Unconditional Diffusion Model''} \end{center}

\section{Proofs}
\subsection{Theorem \ref{thm:kl_to_eps}}\label{app:kl_to_eps_proof}
\kleps*
\begin{proof}
The proof is a modification from~\citet{song2021maximum}. Let us first state the PF ODEs of the two distributions explicitly:
\begin{align}
\frac{\d\rvx_t}{\d t} &= \vf(\rvx_t,t) + \frac{g(t)^2}{2\sigma_t} \boldeps_\phi(\rvx_t,t), \ \ \ \boldeps_\phi(\rvx_t,t) = - \sigma_t \nabla_\rvx \log \phi_t(\rvx) \\
\frac{\d\rvx_t}{\d t} &= \vf(\rvx_t,t) + \frac{g(t)^2}{2\sigma_t} \boldeps_\psi(\rvx_t,t), \ \ \ \boldeps_\psi(\rvx_t,t) = - \sigma_t \nabla_\rvx \log \psi_t(\rvx). 
\end{align}
We make the following assumption about $\phi_t$ and $\psi_t$:
\begin{equation}\label{eq:assumption}
    \forall t \in [0, T], \ \exists k > 0 \textrm{ s.t. } \phi_t(\rvx) = O(e^{-\norm{\rvx}^k_2}), \ \ \psi_t(\rvx) = O(e^{-\norm{\rvx}^k_2}) \textrm{ as } \norm{\rvx}_2 \rightarrow \infty.
\end{equation}

We start by rewriting the KL divergence between $\phi_0$ and $\psi_0$ in integral form:
\begin{align}\label{eq:kl_int}
\begin{split}
    \KL(\phi_0\|\psi_0) &= \KL(\phi_0\|\psi_0) - \KL(\phi_T\|\psi_T) + \KL(\phi_T\|\psi_T) \\
    &= -\int_0^T \frac{\partial \KL(\phi_t \| \psi_t)}{\partial t} \d t + \KL(\phi_T\|\psi_T).
\end{split}
\end{align}
As we can treat the PF ODE as a special case of an SDE with zero diffusion term, we can obtain the Fokker-Planck of the marginal density of the PF ODEs, also known as the continuity equation, as follows:
\begin{equation}
    \frac{\partial \phi_t}{\partial t} =  \nabla_\rvx \cdot \left( - \vf(\rvx_t,t)\phi_t(\rvx) - \frac{g(t)^2}{2\sigma_t} \boldeps_\phi(\rvx_t,t) \phi_t(\rvx) \right) = \nabla_\rvx \cdot \left( \vh_\phi \phi_t(\rvx) \right)
\end{equation}
where we define $\vh_\phi \coloneq - \vf(\rvx_t,t) - \frac{g(t)^2}{2\sigma_t} \boldeps_\phi(\rvx_t,t)$ for simplicity. Similarly, $\frac{\partial \psi_t}{\partial t} = \nabla_\rvx \cdot (\vh_\psi \psi_t(\rvx))$. Let us now rewrite the time-derivative $\frac{\partial \KL(\phi_t \| \psi_t)}{\partial t}$ in Eq.~\ref{eq:kl_int} as follows:
\begin{align*}
 \frac{\partial \KL(\phi_t \| \psi_t)}{\partial t}  &= \frac{\partial}{\partial t} \int \phi_t(\rvx) \log \frac{\phi_t(\rvx)}{\psi_t(\rvx)}\d\rvx \\
 &= \int \frac{\partial \phi_t}{\partial t} \log \frac{\phi_t(\rvx)}{\psi_t(\rvx)} \d\rvx + \underbrace{\int \frac{\partial \phi_t(\rvx)}{\partial t} \d\rvx}_{=0} - \int \frac{\phi_t(\rvx)}{\psi_t(\rvx)} \frac{\partial \psi_t(\rvx)}{\partial t} \d\rvx \\
 &= \int \nabla_\rvx \cdot \left( \vh_\phi(\rvx,t)\phi_t(\rvx)\right)\log \frac{\phi_t(\rvx)}{\psi_t(\rvx)} \d\rvx - \int \frac{\phi_t(\rvx)}{\psi_t(\rvx)}\nabla_\rvx \cdot (\vh_\psi(\rvx,t) \psi_t(\rvx)) \d\rvx \\
 & \stackrel{(i)} = \int \phi_t(\rvx)[ \vh_\phi^\top(\rvx,t) -\vh_\psi^\top(\rvx,t)][\nabla_\rvx \log \phi_t(\rvx) - \nabla_x \log \psi_t(\rvx)] \d\rvx \\
 &= -\frac{1}{2}\int \phi_t(\rvx) \frac{g(t)^2}{\sigma_t} \norm{\boldeps_\phi(\rvx, t) - \boldeps_\psi(\rvx,t)}_2^2\d\rvx,
\end{align*}
where (i) is due to integration by parts and the fact that $\lim_{\rvx \rightarrow \infty} \vh_\phi(\rvx,t)\phi_t(\rvx) = 0$ and $\lim_{\rvx \rightarrow \infty} \vh_\psi(\rvx,t)\psi_t(\rvx) = 0$ due to assumption~\ref{eq:assumption}. Combining with Eq.~\ref{eq:kl_int} gives us the desired result:
\begin{equation}
    \KL(\phi_0 \| \psi_0) = \frac{1}{2}\int_0^T \mathbb{E}_{x \sim \phi_t} \frac{g(t)^2}{\sigma_t} \norm{ \boldeps_\phi(\rvx_t, t) - \boldeps_\psi(\rvx_t,t)}^2_2 \d t + \KL(\phi_T \| \psi_T). 
\end{equation}
\end{proof}

\subsection{OT Toy Example}\label{app:ot}
We derive here in detail the toy example discussed in Sec.~\ref{sec:ot}, where we will prove the optimal transport map between a multivariate Gaussian and standard normal is identical to the diffusion PF ODE path. We consider our source distribution as $p_0(\rvx) \sim \mathcal{N}(\rva, \rmI)$, $\rva \in \mathbb{R}^d \textrm{ and } \rmI \in \mathbb{R}^{d\times d}$. We choose our forward SDE to be parameterized as:
\begin{equation}\label{eq:toy_sde}
    \d\rvx_t = -\rvx \d t + \sqrt{2} \d\rvw_t,
\end{equation} 
which is the same Ornstein–Uhlenbeck process as the DDPM forward SDE Eq.~\ref{eq:vpsde} with a constant noise schedule $\beta(t) = 2$. This is also commonly referred to as the Langevin equation. 

As Eq.~\ref{eq:toy_sde} has affine drift coefficients and a starting distribution which is normal, we know that the marginal distributions at intermediate times are also normal, $p_t(\rvx) \sim \mathcal{N}(\boldsymbol{\mu}(t), \boldsymbol{\Sigma}(t))$. Furthermore, we can calculate the means and variances analytically using Eq. 5.50 and Eq. 5.51 of~\citet{sarkka2019applied}:
\begin{equation}
    \frac{\d\boldsymbol{\mu}(t)}{\d t} = -\boldsymbol{\mu}(t), \ \ \ \ \ \frac{\d\boldsymbol{\Sigma}(t)}{\d t} = -2 \boldsymbol{\Sigma}(t) + 2
\end{equation}
with solutions
\begin{equation}
    \boldsymbol{\mu}(t) = \boldsymbol{\mu}(0)e^{-t} = \rva e^{-t}, \ \ \ \ \boldsymbol{\Sigma}(t) = \rmI + e^{-2t}(\boldsymbol{\Sigma}(0) - \rmI) = \rmI.
\end{equation}
Thus, the marginal density has the form $p_t(\rvx) = \mathcal{N}(\rva e^{-t}, \rmI)$, from which we compute the score as $\nabla_\rvx \log p_t(\rvx) = -\rvx+\rva e^{-t}$. We can substitute this into the corresponding PF ODE to obtain:
\begin{align}\label{eq:ot_pfode}
\begin{split}
    \frac{\d\rvx_t}{\d t} &= - \rvx - \nabla_\rvx\log p_t(\rvx) \\
    &= -\rva e^{-t}.
\end{split}
\end{align}
The optimal transport map denoted $E_{p_0}(\rvx)$ is obtained by solving Eq.~\ref{eq:ot_pfode} to get $\rvx_t = \rvx_0 + \rva(e^{-t} - \rmI)$ and taking the limit $t\rightarrow \infty$. This gives us $E_{p_0}(\rvx) = \rvx - \rva$, which is precisely the OT map between $p_0(\rvx) \sim \mathcal{N}(\rva, \rmI)$ and $\mathcal{N}(\boldsymbol{0},\rmI)$ (cf. Eq. 2.40 in~\citet{peyre2019computational}).

\section{Experimental Details}
\paragraph{\modelname{}.} As mentioned in Sec.~\ref{sec:exp}, we utilize a single unconditional diffusion model trained on CelebA and ImageNet at $32 \times 32$ and $64 \times 64$ resolution respectively. We train our own CelebA model and utilize the ImageNet checkpoint trained using Improved DDPM's $L_{\textrm{hybrid}}$ objective (Eq. 16 of~\citet{nichol2021improved}) from the official repository\footnote{\texttt{\href{https://github.com/openai/improved-diffusion}{https://github.com/openai/improved-diffusion}}}. Both models use a cosine noise schedule with a total of 4000 diffusion steps. For \modelname{}-1D, we fit a KDE using a Gaussian kernel with a bandwith of 5. For \modelname{}-6D, we fit a GMM with hyperparameters obtained by sweeping over a predefined number of mixture components (e.g., 50, 100) and covariance type (e.g., diagonal, full, tied). Both are implemented using the \texttt{sklearn} library.

On a single Nvidia A5000 GPU, DiffPath takes approximately 0.25s and 0.94s per integration step on $32\times32$ and $64\times64$ images respectively with a batch size of 256.

\paragraph{Diffusion Baselines.} For all diffusion baselines, we rely on the official GitHub repositories and open-source checkpoints where possible. The repositories are listed as follows: MSMA\footnote{\texttt{\href{https://github.com/ahsanMah/msma}{https://github.com/ahsanMah/msma}}}, DDPM-OOD\footnote{\texttt{\href{https://github.com/marksgraham/ddpm-ood}{https://github.com/marksgraham/ddpm-ood}}}, LMD\footnote{\texttt{\href{https://github.com/zhenzhel/lift_map_detect}{https://github.com/zhenzhel/lift\_map\_detect}}}. For NLL and IC, we use the pre-trained CIFAR10 checkpoint from Improved DDPM and train our own model for SVHN using the same hyperparameters at $32\times32$ resolution. We calculate the NLL using the default implementation in Improved DDPM, while we compute IC using code from the LR repository\footnote{\texttt{\href{https://github.com/XavierXiao/Likelihood-Regret}{https://github.com/XavierXiao/Likelihood-Regret}}} due to lack of official code from the IC authors. We train all baselines using 1-3 A5000 GPUs.

\section{Near-OOD Results}\label{app:near-ood}
\begin{table}[]
\centering
\caption{Average AUROC results for near-OOD tasks as proposed in OpenOOD~\cite{yang2022openood}. We use DiffPath-6D with ImageNet as the base distribution with 10 DDIM steps. \textbf{Bold} and {\ul underline} denotes the best and second best result respectively. We also show the number of function evaluations (NFE) for diffusion methods, where lower is better.}
\label{tab:near-ood}
\begin{tabular}{@{}ccc@{}}
\toprule
Method      & CIFAR10 & TinyImageNet \\ \midrule
KLM         & 0.792   & 0.808        \\
VIM         & {\ul 0.887}   & 0.787        \\
KNN         & \textbf{0.907}   & 0.816        \\
DICE        & 0.783   & {\ul 0.818}        \\
DiffPath-6D & 0.607   & \textbf{0.845}        \\ \bottomrule
\end{tabular}
\end{table}
To further investigate the performance of DiffPath on near-OOD tasks, we ran experiments on two tasks proposed in OpenOOD~\cite{yang2022openood}. The first task involves CIFAR10 as the in-distribution data with CIFAR100 and TinyImageNet (also known as ImageNet200) as out-of-distribution datasets. The second involves TinyImageNet as in-distribution data and SSB~\cite{vazeopen} (hard split) and NINCO~\cite{bitterwolf2023or} as out-of-distribution data (see the official repository\footnote{\texttt{\href{https://github.com/Jingkang50/OpenOOD}{https://github.com/Jingkang50/OpenOOD}}} for full details). 

We compare our results against the four latest discriminative baselines reported in OpenOOD~\cite{yang2022openood} under the ``w/o Extra Data, w/o Training" category, which are KLM~\cite{hendrycks2022scaling}, VIM~\cite{wang2022vim}, KNN~\cite{sun2022out} and DICE~\cite{sun2022dice}. We report the average AUROC of each task in Table~\ref{tab:near-ood}. The results are mixed: DiffPath performs the best for the TinyImageNet task, but obtains the poorest result for the CIFAR10 task. 

As noted in the main text, to our knowledge near-OOD tasks are not a standard evaluation setup for generative methods. We hypothesize that such tasks are better suited to discriminative methods as gradient-based classification training enables the model to learn fine-grained features specific to each in-distribution class, which we believe is crucial in this context. In contrast, generative models focus on maximizing the likelihood of the overall data distribution and are not explicitly trained to identify subtle discriminative features. This could potentially lead to weaker performance in tasks where the distributions exhibit a high degree of overlap. It should be noted that discriminative methods typically require class labels, while unconditional generative methods do not, thus constraining the use of the former to cases with labelled in-distribution data.

\end{document}